\documentclass[runningheads]{llncs}

\usepackage{cite}
\usepackage{amsmath}
\allowdisplaybreaks
\usepackage{amsfonts}
\usepackage{amssymb}
\usepackage{array}
\usepackage{subfig}
\usepackage{url}
\usepackage{graphicx}
\usepackage{hyperref}
\usepackage{algorithm}
\usepackage{algpseudocode}

\usepackage{booktabs}
\usepackage{color}
\usepackage{ulem}
\usepackage[font=small,labelfont=bf]{caption}
\newtheorem{assumption}{Assumption}
\usepackage[misc,geometry]{ifsym}

\newcommand{\NAME}{\textsc{FedGSP}}
\newcommand{\NAIVE}{NaiveGSP}
\newcommand{\GROUPER}{ICG}


\begin{document}

\title{Heterogeneous Federated Learning via Grouped Sequential-to-Parallel Training}
\titlerunning{Federated Grouped Sequential-to-Parallel Training}

\author{Shenglai Zeng\inst{1}\thanks{Shenglai Zeng and Zonghang Li are of equal contributions.} \and
Zonghang Li\inst{1,3}$^\star$ \and
Hongfang Yu\inst{1}(\Letter) \and
Yihong He\inst{1} \and \\
Zenglin Xu\inst{2} \and
Dusit Niyato\inst{3} \and
Han Yu\inst{3}}
\authorrunning{S. Zeng, Z. Li, H. Yu, et al.}

\institute{Sch of Info \& Comm Engin, University of Electronic Science and Technology of China, Chengdu, China \\
\email{\{shenglaizeng, lizhuestc, heyh.uestc\}@gmail.com, yuhf@uestc.edu.cn} \and
Sch of Comp Sci \& Tech, Harbin Institute of Technology, Shenzhen, China \\
\email{xuzenglin@hit.edu.cn} \and
Sch of Comp Sci \& Engin, Nanyang Technological University, Singapore \\
\email{\{dniyato, han.yu\}@ntu.edu.sg}}

\maketitle

\begin{abstract}
Federated learning (FL) is a rapidly growing privacy preserving collaborative machine learning paradigm. In practical FL applications, local data from each data silo reflect local usage patterns. Therefore, there exists heterogeneity of data distributions among data owners (a.k.a. FL clients). If not handled properly, this can lead to model performance degradation. This challenge has inspired the research field of heterogeneous federated learning, which currently remains open. In this paper, we propose a data heterogeneity-robust FL approach, \NAME, to address this challenge by leveraging on a novel concept of dynamic Sequential-to-Parallel (STP) collaborative training. \NAME~assigns FL clients to homogeneous groups to minimize the overall distribution divergence among groups, and increases the degree of parallelism by reassigning more groups in each round. It is also incorporated with a novel Inter-Cluster Grouping (\GROUPER) algorithm to assist in group assignment, which uses the centroid equivalence theorem to simplify the NP-hard grouping problem to make it solvable. Extensive experiments have been conducted on the non-i.i.d. FEMNIST dataset. The results show that \NAME~improves the accuracy by 3.7\% on average compared with seven state-of-the-art approaches, and reduces the training time and communication overhead by more than 90\%.

\begin{keywords}
Federated learning \and 
Distributed data mining \and
Heterogeneous data \and  Clustering-based learning
\end{keywords}
\end{abstract}

\section{Introduction}
Federated learning (FL) \cite{kairouz2019advances}, as a privacy-preserving collaborative paradigm for training machine learning (ML) models with data scattered across a large number of data owners, has attracted increasing attention from both academia and industry. Under FL, data owners (a.k.a. FL clients) submit their local ML models to the FL server for aggregation, while local data remain private. FL has been applied in fields which are highly sensitive to data privacy, including healthcare \cite{xu2021federated}, manufacturing \cite{khan2021federated} and next generation communication networks \cite{lim2020federated}. In practical applications, FL clients' local data distributions can be highly heterogeneous due to diverse usage patterns. This problem is referred to as the non-independent and identically distributed (non-i.i.d.) data challenge, which negatively affects training convergence and the performance of the resulting FL model\cite{zhao2018federated}.

Recently, heterogeneous federated learning approaches have been proposed in an attempt to address this challenge. These works try to make class distributions of different FL clients similar to improve the performance of the resulting FL model. In \cite{zhao2018federated, yao2019federated, yoshida2020hybridfl}, FL clients share a small portion of local data to build a common meta dataset to help correct deviations caused by non-i.i.d. data. In \cite{jeong2018communication, duan2019astraea}, data augmentation is performed for categories with fewer samples to reduce the skew of local datasets. These methods are vulnerable to privacy attacks as misbehaving FL servers or clients can easily compromise the shared private data and the augmentation process. To align client data distributions without exposing the FL process to privacy risks, we group together heterogeneous FL clients so that each group can be perceived as a homogeneous ``client'' to participate in FL. This process does not involve any manipulation of private data itself and is therefore more secure.

An intuitive approach to achieve this goal is to assign FL clients to groups with similar overall class distribution, and use collaborative training to coordinate model training within and among groups. However, designing such an approach is not trivial due to the following two challenges. Firstly, assigning FL clients to a specified number of groups of equal group sizes to minimize the data divergence among groups (which can be reduced from the well-known bin packing problem \cite{garey1978strong}) is an NP-hard problem. Moreover, such group assignment process needs to be performed periodically in a dynamic FL environment, which introduces higher requirements for its effectiveness and execution efficiency. Secondly, even if the data distributions among groups are forced to be homogeneous, the data within each group can still be skewed. Due to the robustness of sequential training mode (STM) to data heterogeneity, some collaborative training approaches (e.g., \cite{duan2019astraea}) adopt STM within a group to train on skewed client data. Then, the typical parallel training mode (PTM) can be applied among homogeneous groups. These methods are promising, but are still limited due to their static properties, which prevents them from adapting to the changing needs of FL at different stages. In FL, STM should be emphasized in the early stage to achieve a rapid increase in accuracy in the presence of non-i.i.d. data, while PTM should be emphasized in the later stage to promote convergence. In the static mode, the above parallelism degree must be carefully designed to realize a proper trade-off between sensitivity to heterogeneous data of PTM and overfitting of STM. Otherwise, the FL model performance may suffer.

To address these challenges, this paper proposes a new concept of dynamic collaborative Sequential-to-Parallel (STP) training to improve FL model performance in the presence of non-i.i.d. data. The core idea of STP is to force STM to be gradually transformed into PTM as FL model training progresses. In this way, STP can better refine unbiased model knowledge in the early stage, and promote convergence while avoiding overfitting in the later stage. To support the proposed STP, we propose a \underline{Fed}erated \underline{G}rouped \underline{S}equential-to-\underline{P}arallel (\NAME) training framework. \NAME~allows reassignment of FL clients into more groups in each training round, and introduces group managers to manage the dynamically growing number of groups. It also coordinates model training and transmission within and among groups. In addition, we propose a novel Inter-Cluster Grouping (\GROUPER) method to assign FL clients to a pre-specified number of groups, which uses the centroid equivalence theorem to simplify the original NP-hard grouping problem into a solvable constrained clustering problem with equal group size constraint. \GROUPER~can find an effective solution with high efficiency (with a time complexity of $\mathcal{O}(\frac{K^6\mathcal{F}\tau}{M^2}\log{Kd})$). We evaluate \NAME~on the most widely adopted non-i.i.d. benchmark dataset FEMNIST\cite{caldas2018leaf} and compare it with seven state-of-the-art approaches including FedProx\cite{li2018federated}, FedMMD\cite{yao2018two}, FedFusion\cite{yao2019towards}, IDA\cite{yeganeh2020inverse}, FedAdam, FedAdagrad and FedYogi\cite{reddi2020adaptive}. The results show that \NAME~improves model accuracy by 3.7\% on average, and reduces training time and communication overhead by more than 90\%. To the best of our knowledge, \NAME~is the first dynamic collaborative training approach for FL.

\section{Related Work}
Existing heterogeneous FL solutions can be divided into three main categories: 1) data augmentation, 2) clustering-based learning, and 3) adaptive optimization.

\textbf{Data Augmentation:} Zhao et al. \cite{zhao2018federated} proved that the FL model accuracy degradation due to heterogeneous local data can be quantified by the earth move distance (EMD) between the client and global data distributions. This result motivates some research works to balance the sample size of each class through data augmentation. Zhao et al. \cite{zhao2018federated} proposed to build a globally shared dataset to expand client data. Jeong et al. \cite{jeong2018communication} used the conditional generative network to generate new samples for categories with fewer samples. Similarly, Duan et al. \cite{duan2019astraea} used augmentation techniques such as random cropping and rotation to expand client data. These methods are effective in improving the FL model accuracy by reducing data skew. However, they involve modifying clients' local data, which can lead to serious privacy risks.

\textbf{Clustering-based Learning:} Another promising way to reduce data heterogeneity is through clustering-based FL. Sattler et al. \cite{sattler2020clustered} groups FL clients with similar class distributions into one cluster, so that FL clients with dissimilar data distributions do not interfere with each other. This method works well in personalized FL \cite{fallah2020personalized} where FL is perform within each cluster and an FL model is produced for each cluster. However, it is not the same as our goal which is to train one shared FL model that can be generalized to all FL clients. Duan et al. \cite{duan2019astraea} makes the KullbackLeibler divergence of class distributions similar among clusters, and proposed a greedy best-fit strategy to assign FL clients.

\textbf{Adaptive Optimization:} Other research explores adaptive methods to better merge and optimize client- and server-side models. On the client side, Li et al. \cite{li2018federated} added a proximal penalty term to the local loss function to constrain the local model to be closer to the global model. Yao et al. \cite{yao2018two} adopted a two-stream framework and used transfer learning to transfer knowledge from the global model to the local model. A feature fusion method has been further proposed to better merge the features of local and global models \cite{yao2019towards}. On the server side, Yeganeh et al. \cite{yeganeh2020inverse} weighed less out-of-distribution models based on inverse distance coefficients during aggregation. Instead, Reddi et al. \cite{reddi2020adaptive} focused on server-side optimization and introduced three advanced adaptive optimizers (Adagrad, Adam and Yogi) to obtain FedAdagrad, FedAdam and FedYogi, respectively. These methods perform well in improving FL model convergence.

Solutions based on data augmentation are at risky due to potential data leakage, while solutions based on adaptive optimization do not solve the problem of class distribution divergence causing FL model performance to degrade. \NAME~focuses on clustering-based learning. Different from existing research, it takes a novel approach of dynamic collaborative training, which allows dynamic scheduling and reassignment of clients into groups according to the changing needs of FL.
\section{Federated Grouped Sequential-to-Parallel Learning}
In this section, we first describe the concept and design of the STP approach. Then, we present the \NAME~framework which is used to support STP. Finally, we mathematically formulate the group assignment problem in STP, and present our practical solution \GROUPER.

\subsection{STP: The Sequential-to-Parallel Training Mode}\label{section:stp}
Under our grouped FL setting, FL clients are grouped such that clients in the same group have heterogeneous data but the overall data distributions among the groups are homogeneous. Due to the difference in data heterogeneity, the training modes within and among groups are designed separately. We refer to this jointly designed FL training mode as the ``collaborative training mode''.

\begin{figure}[ht]
\centering
\includegraphics[width=\textwidth]{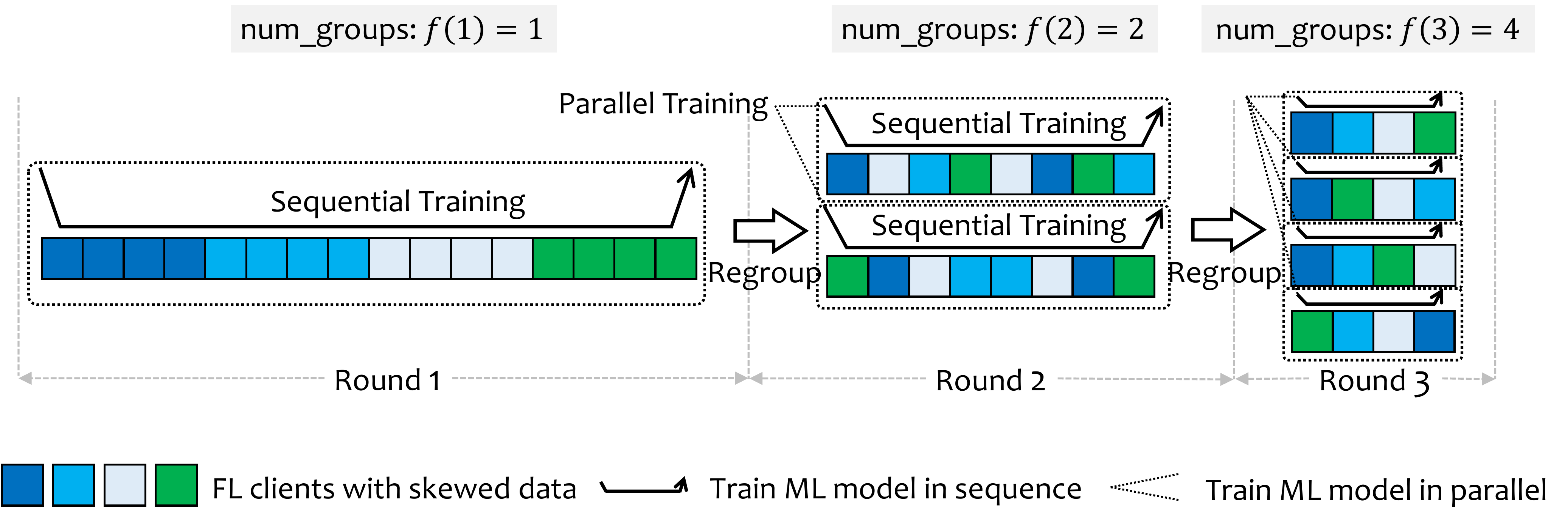}
\caption{An example of STP. The ML model in each group is trained in sequence, while ML models among groups are trained in parallel. In each round $r$, the group number grows according to function $f$, and FL clients are regrouped and shuffled.}
\label{fig:sequential-to-parallel}
\end{figure}

Intuitively, the homogeneous groups can be trained in a simple parallel mode PTM because the heterogeneity of their data has been eliminated by client grouping. Instead, for FL clients in the same group whose local data are still skewed, the sequential mode STM can be useful. In STM, FL clients train the model in a sequential manner. A FL client receives the model from its predecessor client and delivers the local trained model to its successor client to continue training. In the special case of training with only one local epoch (i.e., $e=1$), STM is equivalent to centralized SGD, which gives it robustness against data heterogeneity.

This naive collaborative training mode is static and has limitations. Therefore, we extend it to propose a more dynamic approach STP. As shown in Figure \ref{fig:sequential-to-parallel}, STP reassigns FL clients into $f(r)$ groups and shuffles their order in each round $r$, where $f$ is a pre-specified group number growth function, with the goal to dynamically adjust the degree of parallelism. Then, STP can be smoothly transformed from (full) sequential mode to (full) parallel mode. This design can prevent catastrophic forgetting caused by the long ``chain of clients'' that causes the FL model to forget the data of previous clients and overfit the data of subsequent clients, and can also prevent the FL model from learning interfering information such as the order of clients. Moreover, the growing number of groups improves the parallelism efficiency, which promotes convergence and speeds up training when the global FL model is close to convergence.

\begin{algorithm}[t]
\caption{\colorbox{green}{\textsc{\textbf{Sequential-To-Parallel}}} \textbf{(main)}}\label{alg:stp}
\begin{algorithmic}[1]
\Require All FL clients $\mathcal{C}$, the total number of FL clients $K$, the maximum training rounds $R$, the group number growth function $f$, the group sampling rate $\kappa$.
 \Ensure The well-trained global FL model $\omega_{\mathrm{global}}^{R}$.
\vspace{1mm}
\State Initialize the global FL model $\omega_{\mathrm{global}}^{0}$;
\For{each round $r=1,\cdots,R$}
\State Reassign all FL clients $\mathcal{C}$ to $f(r)$ groups to obtain $\mathcal{G}$,\label{code:tab1}\\ 
\qquad\qquad\qquad\qquad $\mathcal{G}\gets$\Call{\colorbox{yellow}{\textbf{Inter-Cluster-Grouping}}}{$\mathcal{C}, K, f, r$};
\State Randomly sample a subset of groups $\tilde{\mathcal{G}}\subset\mathcal{G}$ with proportion $\kappa$;\label{code:tab2}
\For{each group $\mathcal{G}_{m}$ \textbf{in} $\tilde{\mathcal{G}}$ in parallel}
\State The first FL client $\mathcal{C}_m^1$ in $\mathcal{G}_m$ initializes $\omega_m^1\gets\omega_{\mathrm{global}}^{r-1}$;\label{code:tab3}
\For{each FL client $\mathcal{C}_m^k$ \textbf{in} $\mathcal{G}_m$ in sequence}
\State Train $\omega_m^k$ on local data $\mathcal{D}_m^k$ using mini-batch SGD for one epoch;\label{code:tab4}
\State Send the trained $\omega_m^{k+1}\gets\omega_m^k$ to the next FL client $\mathcal{C}_m^{k+1}$;\label{code:tab5}
\EndFor
\State The last FL client $\mathcal{C}_m^{K/f(r)}$ in $\mathcal{G}_m$ uploads $\omega_m^{K/f(r)}$;\label{code:tab6}
\EndFor
\State Update the global FL model using the aggregation $\omega_{\mathrm{global}}^{r}\gets\frac{\sum_{\forall \mathcal{G}_m\in\tilde{\mathcal{G}}}{(\omega_m^{K/f(r)}})}{f(r)}$;\label{code:tab7}
\EndFor
\State \Return $\omega_{\mathrm{global}}^{R}$;\label{code:tab8}
\end{algorithmic}
\end{algorithm}

The pseudo code of STP is given in Algorithm \ref{alg:stp}. In round $r$, STP divides all FL clients into $f(r)$ groups using the \GROUPER~grouping algorithm (Line \ref{code:tab1}), which will be described in Section \ref{section:grouping}. Due to the similarity of data among groups, each group can independently represent the global distribution, so only a small proportion of $\kappa$ groups are required to participate in each round of training (Line \ref{code:tab2}). The first FL client in each group pulls the global model from the FL server (Line \ref{code:tab3}), and trains its local model using mini-batch SGD for one epoch (Line \ref{code:tab4}). The trained local model is then delivered to the next FL client to continue training (Line \ref{code:tab5}), until the last FL client is reached. The last FL client in each group sends the trained model to the FL server (Line \ref{code:tab6}). Models from all groups are aggregated to update the global FL model (Line \ref{code:tab7}). The above steps repeat until the maximum training round $R$ is reached. Finally, the well-trained global FL model is obtained (Line \ref{code:tab8}).

The choice of the growth function for the number of groups, $f$, is critical for the performance of STP. We give three representative growth functions, including linear (smooth grow), logarithmic (fast first and slow later), and exponential (slow first and fast later) growth functions:
\begin{align}
& \mathrm{Linear\ Growth\ Function:} & f(r)=&\beta\left\lfloor\alpha (r-1)+1\right\rfloor, \\
& \mathrm{Log\ Growth\ Function:} & f(r)=&\beta\left\lfloor\alpha\ln r+1\right\rfloor, \\
& \mathrm{Exp\ Growth\ Function:} & f(r)=&\beta\lfloor(1+\alpha)^{r-1}\rfloor,
\end{align}
where the real number coefficient $\alpha$ controls the growth rate, and the integer coefficient $\beta$ controls the initial number of groups and the growth span. We recommend to initialize $\alpha$, $\beta$ to a moderate value and explore the best setting in an empirical manner.

\subsection{\NAME: The Grouped FL Framework To Enable STP}\label{section:fedgsp}
In this section, we describe the \NAME~framework that enables dynamic STP. \NAME~is generally a grouped FL framework that supports dynamic group management, as shown in Figure \ref{fig:framework}. The basic components include a top server (which acts as an FL server and performs functions related to group assignment) and a large number of FL clients. FL clients can be smart devices with certain available computing and communication capabilities, such as smart phones, laptops, mobile robots and drones. They collect data from the surrounding environment and use the data to train local ML models.

In addition, \NAME~creates group managers to facilitate the management of the growing number of groups in STP. The group managers can be virtual function nodes deployed in the same machine as the top server. Whenever a new group is built, a new group manager is created to assist the top server to manage this group by performing the following tasks:

\textbf{1. Collect distribution information.} The group manager needs to collect class distributions of FL clients and report them to the top server. These meta information will be used to assign FL clients to $f(r)$ groups via \GROUPER. 

\textbf{2. Coordinate model training.} The group manager needs to coordinate the sequential training of FL clients in its group, as well as the parallel training with other groups, according to the rules of STP. Specifically, it needs to shuffle the order of clients and report resulting model to the top server for aggregation.

\textbf{3. Schedule model transmission.} In applications such as Industrial IoT systems, wireless devices can directly communicate with each other through wireless sensor networks (WSNs). However, this cannot be realized in most scenarios. Therefore, the group manager needs to act as a communication relay to schedule the transmission of ML models from one client to another.

\begin{figure}[t]
\centering
\includegraphics[width=.9\textwidth]{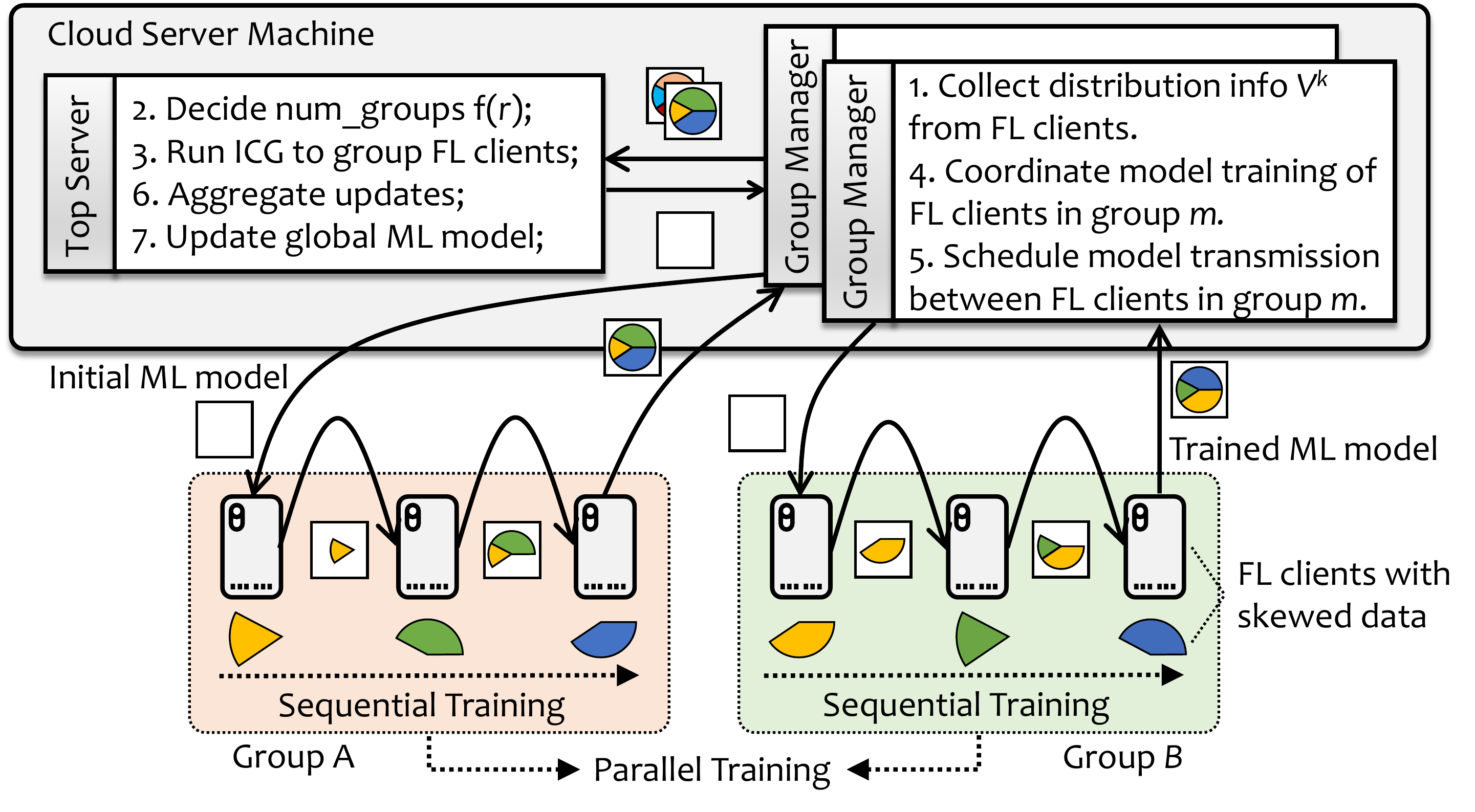}
\caption{An overview of the \NAME~framework.}
\label{fig:framework}
\end{figure}

\subsection{\GROUPER: The Inter-Cluster Grouping Algorithm\label{section:grouping}}
As required by STP, the equally sized groups containing heterogeneous FL clients should have similar overall class distributions. To achieve this goal, in this section, we first formalize the FL client grouping problem which is NP-hard, and then explain how to simplify to propose the \GROUPER~approach.

\vspace{2mm}
\noindent \textbf{(A) Problem Modeling}
\vspace{2mm}

\noindent Considering an $\mathcal{F}$-class classification task involving $K$ FL clients, STP needs to assign these clients to $M$ groups, where $M$ is determined by the group number growth function $f$ and the current round $r$. Our goal is to find a grouping strategy $\mathbf{x}\in\mathbb{I}^{M\times K}$ in the 0-1 space $\mathbb{I}=\{0,1\}$ to minimize the difference in class distributions of all groups, where $\mathbf{x}_m^k=1$ represents the device $k$ is assigned to the group $m$, $\mathcal{V}\in(\mathbb{Z^+})^{\mathcal{F}\times K}$ is the class distribution matrix composed of $\mathcal{F}$-dimensional class distribution vectors of $K$ FL clients, $\mathcal{V}_m\in(\mathbb{Z}^+)^{\mathcal{F}\times 1}$ represents the overall class distribution of group $m$, and $\left<\cdot,\cdot\right>$ represents the distance between two class distributions. The problem can be formalized as follows:
\begin{align}
\underset{\mathbf{x}}{\mathrm{minimize}}\qquad & z=\sum_{m_1=1}^{M-1}\sum_{m_2=m_1+1}^{M}<\mathcal{V}_{m_1},\mathcal{V}_{m_2}>, \label{eq:objective}\\
\mathrm{s.t.}\qquad & M=f(r), \label{eq:group-number}\\
& \sum_{k=1}^{K}\mathbf{x}_m^k\le \left\lceil \frac{K}{M} \right\rceil \quad \forall m=1,\cdots,M, \label{eq:group-capacity}\\
& \sum_{m=1}^{M}{\mathbf{x}_m^k}=1 \qquad\quad \forall k=1,\cdots,K, \label{eq:client-conflict}\\
& \mathcal{V}_m=\sum_{k=1}^{K}{\mathbf{x}_m^k\mathcal{V}^k}\quad \forall m=1,\cdots,M, \label{eq:overall-dist}\\
& \mathbf{x}_m^k\in\{0,1\}, ~k\in[1,K], ~m\in[1,M].
\label{eq:variable-constraint}
\end{align}
Constraint \eqref{eq:group-number} ensures that the number of groups $M$ meets $f(r)$ required by STP. Constraint \eqref{eq:group-capacity} ensures that the groups have similar or equal size $\left\lceil \frac{K}{M} \right\rceil$. Constraint \eqref{eq:client-conflict} ensures that each client can only be assigned to one group at a time. The overall class distribution $\mathcal{V}_m$ of the group $m$ is defined by Eq. \eqref{eq:overall-dist}, where $\mathcal{V}^k\in\mathcal{V}$ is the class distribution vector of client $k$. Constraint \eqref{eq:variable-constraint} restricts the decision variable $\mathbf{x}$ to only take up a value of 0 or 1.

\begin{proposition}\label{prop:bpp}
The NP-hard bin packing problem (BPP) can be reduced to the grouping problem in Eq. \eqref{eq:objective} to Eq. \eqref{eq:variable-constraint}, making it also an NP-hard problem.
\end{proposition}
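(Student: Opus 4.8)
The plan is to establish NP-hardness by exhibiting a polynomial-time reduction from the bin packing problem (known to be NP-hard) to the grouping problem of Eqs.~\eqref{eq:objective}--\eqref{eq:variable-constraint}. I would work with the decision version of the grouping problem at threshold $z = 0$, which is the cleanest choice: since $\left<\cdot,\cdot\right>$ is a distance (nonnegative, vanishing exactly when its two arguments coincide), $z = 0$ holds iff $\mathcal{V}_{m_1} = \mathcal{V}_{m_2}$ for every pair of groups, i.e. iff all groups carry the identical aggregate class vector $\frac{1}{M}\sum_{k}\mathcal{V}^k$. The decision problem thus asks precisely whether the clients can be split into $M$ equally sized groups all having the same aggregate class distribution, which is exactly the structure a perfect packing produces.

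For the construction, I would start from a bin packing instance with item sizes $a_1,\dots,a_n$, capacity $C$, and $B$ bins in the perfect-packing regime $\sum_i a_i = BC$; this is a restriction of bin packing that is itself NP-hard (it is equal-sum $B$-way partition) and it forces every bin to be filled exactly to $C$. I would create one client per item, give it the one-dimensional class vector $\mathcal{V}^k = a_k$ (so $\mathcal{F}=1$), and set $M = B$. Under this encoding a valid packing corresponds to a grouping whose group sums all equal $C = \frac{1}{M}\sum_k\mathcal{V}^k$, yielding $z = 0$, and conversely any grouping with $z = 0$ has all group sums equal to $C$ and hence, after the gadget below is stripped away, recovers a perfect packing. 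The construction is evidently polynomial in the instance size.

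The main obstacle, and the one step that needs genuine care, is that constraints \eqref{eq:group-capacity}--\eqref{eq:client-conflict} bound the \emph{number} of clients per group, whereas bin packing bounds the total \emph{weight} per bin; a perfect packing may place different numbers of items in different bins, so the raw correspondence can violate the equal-cardinality requirement $\left\lceil K/M\right\rceil$. I would bridge this gap with a padding gadget: introduce $n(B-1)$ dummy clients carrying the all-zero class vector and set $K = nB$, so that $\left\lceil K/M\right\rceil = n$ and every group must hold exactly $n$ clients. An all-zero vector is admissible here (it merely models a client with no samples) and leaves every group sum unchanged, and a one-line counting check shows there are always exactly enough dummies to complete any packing to uniform cardinality: a bin holding $r_i$ real items needs $n-r_i$ dummies, and $\sum_i(n-r_i)=n(B-1)$. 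With the gadget in place the equivalence ``a perfect packing exists $\iff$ a grouping with $z=0$ exists'' is immediate, and the NP-hardness of the grouping problem follows. Should the formulation insist on strictly positive integer entries, I would instead reduce from the strongly NP-hard \textsc{3-Partition} problem, whose ``exactly three items per part'' structure matches the cardinality constraint directly and uses only positive item sizes, removing the need for any padding.
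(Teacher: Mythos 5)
Your reduction is correct, but it follows a genuinely different route from the paper's. The paper argues in the opposite style: it views the grouping problem in Eq.~\eqref{eq:objective}--Eq.~\eqref{eq:variable-constraint} as ``BPP with additional constraints'' (unit-volume items, bin capacity $\left\lceil K/M\right\rceil$, a fixed number of bins $M$, and a bound $\xi$ on the bin-weight differences), and then claims that BPP is recovered as a special case by ``setting $M$ and $\xi$ to infinity'' in $\mathcal{O}(1)$ time --- a restriction-style argument that is stated in two sentences and never verifies a correspondence of solutions in either direction. Your proof instead builds an explicit Karp reduction on decision versions: you restrict BPP to the perfect-packing regime $\sum_i a_i = BC$ (equivalently, equal-sum $B$-way partition, still NP-hard), encode items as one-dimensional class vectors, set $M=B$, and --- crucially --- notice the mismatch the paper glosses over entirely: constraint \eqref{eq:group-capacity} caps the \emph{number} of clients per group, not the \emph{weight}, so a valid packing need not produce equal-cardinality groups. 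Your zero-vector padding gadget (with the counting check $\sum_i(n-r_i)=n(B-1)$) resolves exactly this gap, and your fallback to \textsc{3-Partition} handles the formulation's restriction $\mathcal{V}\in(\mathbb{Z}^+)^{\mathcal{F}\times K}$ if zero entries are disallowed. What each approach buys: the paper's argument is short and conveys the intuition that the grouping problem subsumes a packing structure, but as written it is hand-wavy --- the grouping objective (pairwise divergence with $M$ fixed) is not BPP's objective (minimize bin count under a weight capacity), there is no $\xi$ anywhere in the formulation Eq.~\eqref{eq:objective}--Eq.~\eqref{eq:variable-constraint}, and no solution-preserving map is exhibited; your argument is longer but is an actual polynomial-time many-one reduction with both directions of the equivalence checked, and it makes explicit the one assumption both proofs need, namely that $\left<\cdot,\cdot\right>$ vanishes exactly when the two distributions coincide. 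In short, your proof establishes rigorously what the paper only sketches.
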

\begin{proof}
The problem stated by Eq. \eqref{eq:objective} to  Eq. \eqref{eq:variable-constraint} is actually a BPP with additional constraints, where $K$ items with integer weight $\mathcal{V}^k$ and unit volume should be packed into the minimum number of bins of integer capacity $\left\lceil \frac{K}{M} \right\rceil$. The difference is that Eq. \eqref{eq:objective} to Eq. \eqref{eq:variable-constraint} restricts the number of available bins to $M$ instead of unlimited, and the difference in the bin weights not to exceed $\xi$. The input and output of BPP and Eq. \eqref{eq:objective} to Eq. \eqref{eq:variable-constraint} are matched, with only additional $\mathcal{O}(1)$ transformation complexity to set $M$ and $\xi$ to infinity. Therefore, BPP can call the solution of Eq. \eqref{eq:objective} to Eq. \eqref{eq:variable-constraint} in $\mathcal{O}(1)$ time to obtain its solution, which proves that the NP-hard BPP \cite{garey1978strong} can be reduced to the problem stated by Eq. \eqref{eq:objective} to Eq. \eqref{eq:variable-constraint}. Therefore, Eq. \eqref{eq:objective} to Eq. \eqref{eq:variable-constraint} is also an NP-hard problem.
\end{proof}

Therefore, it is almost impossible to find the optimal solution within a polynomial time. To address this issue, we adopt the centroid equivalence theorem to simplify the original problem to a constrained clustering problem.

\vspace{2mm}
\noindent \textbf{(B) Inter-Cluster Grouping (\GROUPER)}
\vspace{2mm}

Consider a constrained clustering problem with $K$ points and $L$ clusters, where the size of all clusters is strictly the same $K/L$.

\begin{assumption}\label{icg-assumption}
We make the following assumptions:
\begin{enumerate}
\item $K$ is divisible by $L$;
\item Take any point $\mathcal{V}^m_l$ from cluster $l$, the squared $l_2$-norm distance $\|\mathcal{V}^m_l-C_l\|_2^2$ between the point $\mathcal{V}^m_l$ and its cluster centroid $C_l$ is bounded by $\sigma_l^2$.
\item Take one point $\mathcal{V}^m_l$ from each of $L$ clusters at random, the sum of deviations of each point from its cluster centroid $\epsilon^m=\sum_{l=1}^{L}(\mathcal{V}^m_l-C_l)$ meets $\mathbf{E}\left[\epsilon^m\right]=0$.
\end{enumerate}
\end{assumption}

\begin{definition}[Group Centroid]\label{def:group-centroid}
Given $L$ clusters of equal size, let group $m$ be constructed from one point randomly sampled from each cluster $\{\mathcal{V}^m_1,\cdots,\mathcal{V}^m_L\}$. Then, the centroid of  group $m$ is defined as $C^m=\frac{1}{L}\sum_{l=1}^{L}\mathcal{V}^m_l$.
\end{definition}

\begin{proposition}\label{prop:centroid}
If Assumption \ref{icg-assumption} holds, suppose the centroid of cluster $l$ is $C_l=\frac{L}{K}\sum_{i=1}^{K/L}{\mathcal{V}_l^i}$ and the global centroid is $C_{\mathrm{global}}=\frac{1}{L}\sum_{l=1}^{L}{C_l}$. We have:
\begin{enumerate}
\item The group and global centroids are expected to coincide, $\mathbf{E}[C^m]=C_{\mathrm{global}}$.
\item The error $\|C^m-C_\mathrm{global}\|_2^2$ between the group and global centroids is bounded by $\frac{1}{L^2}\sum_{l=1}^{L}{\sigma_l^2}$.
\end{enumerate}
\end{proposition}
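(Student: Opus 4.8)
The plan is to reduce both claims to one algebraic identity and then exploit the probabilistic structure of the random sampling. First I would express the group centroid as a perturbation of the global centroid: starting from Definition~\ref{def:group-centroid} and adding and subtracting each cluster centroid,
\[
C^m = \frac{1}{L}\sum_{l=1}^{L}\mathcal{V}_l^m = \frac{1}{L}\sum_{l=1}^{L}C_l + \frac{1}{L}\sum_{l=1}^{L}\bigl(\mathcal{V}_l^m - C_l\bigr) = C_{\mathrm{global}} + \frac{1}{L}\epsilon^m,
\]
where $\epsilon^m = \sum_{l=1}^{L}(\mathcal{V}_l^m - C_l)$ is exactly the aggregate deviation appearing in Assumption~\ref{icg-assumption}(3). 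This identity is the workhorse for both parts. For the first claim I would take expectations and invoke Assumption~\ref{icg-assumption}(3), which asserts $\mathbf{E}[\epsilon^m]=0$; hence $\mathbf{E}[C^m] = C_{\mathrm{global}} + \frac{1}{L}\mathbf{E}[\epsilon^m] = C_{\mathrm{global}}$, so the group and global centroids coincide in expectation.

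For the second claim I would take the squared $\ell_2$-norm of the identity and expand it. Writing $\delta_l = \mathcal{V}_l^m - C_l$, this gives
\[
\|C^m - C_{\mathrm{global}}\|_2^2 = \frac{1}{L^2}\Bigl\|\sum_{l=1}^{L}\delta_l\Bigr\|_2^2 = \frac{1}{L^2}\Bigl(\sum_{l=1}^{L}\|\delta_l\|_2^2 + \sum_{l_1\neq l_2}\langle \delta_{l_1}, \delta_{l_2}\rangle\Bigr).
\]
The diagonal terms are bounded directly by $\sigma_l^2$ through Assumption~\ref{icg-assumption}(2), contributing the desired $\frac{1}{L^2}\sum_{l=1}^{L}\sigma_l^2$. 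To dispose of the off-diagonal terms I would use that each group is formed by drawing one point independently and uniformly from every cluster: uniform sampling forces $\mathbf{E}[\delta_l] = \mathbf{E}[\mathcal{V}_l^m] - C_l = 0$ for every $l$, and independence across clusters then yields $\mathbf{E}[\langle \delta_{l_1}, \delta_{l_2}\rangle] = \langle \mathbf{E}[\delta_{l_1}], \mathbf{E}[\delta_{l_2}]\rangle = 0$ whenever $l_1 \neq l_2$. Taking expectations therefore annihilates all cross terms and leaves the claimed bound.

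The main obstacle is precisely the handling of these cross terms. A purely deterministic triangle-inequality estimate only delivers $\frac{1}{L^2}(\sum_l \sigma_l)^2$, which can be weaker by a factor of order $L$; recovering the sharper $\sum_l \sigma_l^2$ form is impossible without a probabilistic argument that exploits independence across clusters together with the zero-mean property. I would therefore make explicit that the stated error bound is to be read in expectation, and that the relevant randomness is the independent uniform draw within each cluster. This also clarifies a subtlety: Assumption~\ref{icg-assumption}(3) alone guarantees only that the \emph{aggregate} deviation is zero-mean, whereas killing the pairwise cross-correlations requires the stronger per-cluster zero-mean and mutual independence, which I would justify from the uniform sampling model underlying Definition~\ref{def:group-centroid}.
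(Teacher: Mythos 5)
Your part 1 is exactly the paper's argument: the same add-and-subtract decomposition $C^m = C_{\mathrm{global}} + \frac{1}{L}\epsilon^m$ followed by Assumption~\ref{icg-assumption}(3). For part 2, however, you take a genuinely different route, and it is the more defensible one. The paper's own proof passes from $\frac{1}{L^2}\bigl\|\sum_{l=1}^{L}(\mathcal{V}_l^m-C_l)\bigr\|_2^2$ to $\frac{1}{L^2}\sum_{l=1}^{L}\|\mathcal{V}_l^m-C_l\|_2^2$ as if
\begin{equation*}
\Bigl\|\sum_{l=1}^{L}\delta_l\Bigr\|_2^2 \;\le\; \sum_{l=1}^{L}\|\delta_l\|_2^2, \qquad \delta_l=\mathcal{V}_l^m-C_l,
\end{equation*}
were a deterministic inequality; it is not (take all $\delta_l$ equal and nonzero: the left side is $L^2\|\delta_1\|_2^2$, the right side $L\|\delta_1\|_2^2$). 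It silently discards the cross terms $\sum_{l_1\ne l_2}\langle\delta_{l_1},\delta_{l_2}\rangle$, which is legitimate only under orthogonality or in expectation. Your proposal identifies precisely this gap — a purely deterministic estimate yields only $\frac{1}{L^2}\bigl(\sum_l\sigma_l\bigr)^2\le\frac{1}{L}\sum_l\sigma_l^2$, weaker by up to a factor of $L$ — and repairs it by reading the bound in expectation: per-cluster zero mean $\mathbf{E}[\delta_l]=0$ (which follows from uniform sampling and $C_l=\frac{L}{K}\sum_i\mathcal{V}_l^i$) plus independence across clusters gives $\mathbf{E}[\langle\delta_{l_1},\delta_{l_2}\rangle]=0$, so $\mathbf{E}\|C^m-C_{\mathrm{global}}\|_2^2\le\frac{1}{L^2}\sum_l\sigma_l^2$. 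What the paper's reading buys is a pointwise, per-group bound, but its proof does not establish it; what your reading buys is a correct proof of the stated constant, at the price of the bound holding only in expectation — which is consistent with how the proposition is used (the grouping objective is argued to vanish in expectation). Your closing caveat is also apt: Assumption~\ref{icg-assumption}(3) constrains only the aggregate deviation $\epsilon^m$, so the per-cluster zero mean and cross-cluster independence you invoke are genuinely additional hypotheses supplied by the sampling model, and should be stated as such.
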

\begin{proof}
\begin{align}
\nonumber
\mathbf{E}[C^m]&=\mathbf{E}[\frac{1}{L}\sum_{l=1}^{L}\mathcal{V}^m_l] =\mathbf{E}[\frac{1}{L}\sum_{l=1}^{L}(\mathcal{V}^m_l-C_l+C_l)] \\
\nonumber
&=\mathbf{E}[\frac{1}{L}\sum_{l=1}^{L}(\mathcal{V}^m_l-C_l)+\frac{1}{L}\sum_{l=1}^{L}{C_l}]=\frac{1}{L}\mathbf{E}[\epsilon^m]+C_{\mathrm{global}}=C_{\mathrm{global}}, \\
\nonumber
\|C^m-C_\mathrm{global}\|_2^2&=\|\frac{1}{L}\sum_{l=1}^{L}{\mathcal{V}_l^m}-\frac{1}{L}\sum_{l=1}^{L}{C_l}\|_2^2=\frac{1}{L^2}\|\sum_{l=1}^{L}{(\mathcal{V}_l^m-C_l)}\|_2^2 \\
\nonumber
&\le\frac{1}{L^2}\sum_{l=1}^{L}\|\mathcal{V}_l^m-C_l\|_2^2=\frac{1}{L^2}\sum_{l=1}^{L}{\sigma_l^2}.
\end{align}
\end{proof}

Proposition \ref{prop:centroid} indicates that there exists a grouping strategy $\tilde{\mathbf{x}}$ and $\mathcal{V}_{m_1}=\sum_{k=1}^{K}\tilde{\mathbf{x}}_{m_1}^k\mathcal{V}^k=LC^{m_1}$, $\mathcal{V}_{m_2}=\sum_{k=1}^{K}\tilde{\mathbf{x}}_{m_2}^k\mathcal{V}^k=LC^{m_2}$ ($\forall m_1\ne m_2$), so that the objective in Eq. \eqref{eq:objective} turns to $z=\sum_{m_1\ne m_2}L<C^{m_1},C^{m_2}>$  and the expectation value reaches 0. This motivates us to use the constrained clustering model to solve $\tilde{\mathbf{x}}$ in the objective Eq. \eqref{eq:objective}. Therefore, we consider the constrained clustering problem below,
\begin{align}
\underset{\mathbf{y}}{\mathrm{minimize}}\qquad & \sum_{k=1}^{K}\sum_{l=1}^{L}\mathbf{y}^k_l\cdot\left(\frac{1}{2}\|\mathcal{V}^k-C_l\|_2^2\right), \label{eq:ccp-objective}\\
\mathrm{s.t.}\qquad & \sum_{k=1}^{K}\mathbf{y}_l^k=\frac{K}{L} \qquad \forall l=1,\cdots,L, \label{eq:ccp-least}\\
& \sum_{l=1}^{L}\mathbf{y}_l^k=1 \qquad\quad \forall k=1,\cdots,K, \label{eq:ccp-client}\\
& \mathbf{y}_l^k\in\{0,1\}, ~k\in[1,K], ~l\in[1,L], \label{eq:ccp-value}
\end{align}
where $\mathbf{y}\in\mathbb{I}^{L\times K}$ is a selector variable, $\mathbf{y}_l^k=1$ means that client $k$ is assigned to cluster $l$ while 0 means not, $C_l$ represents the centroid of cluster $l$. Eq. \eqref{eq:ccp-objective} is the standard clustering objective, which aims to assign $K$ clients to $L$ clusters so that the sum of the squared $l_2$-norm distance between the class distribution vector $\mathcal{V}^k$ and its nearest cluster centroid $C_l$ is minimized. Constraint \eqref{eq:ccp-least} ensures that each cluster has the same size $\frac{K}{L}$. Constraint \eqref{eq:ccp-client} ensures that each client can only be assigned to one cluster at a time. In this simplified problem, Constraint \eqref{eq:client-conflict} is relaxed to $\sum_{m=1}^{M}{\mathbf{x}_m^k}\le 1$ to satisfy the assumption that $K/L$ is divisible.

The above constrained clustering problem can be modeled as a minimum cost flow (MCF) problem and solved by network simplex algorithms \cite{bradley2000constrained}, such as \textsc{SimpleMinCostFlow} in Google OR-Tools. Then, we can alternately perform cluster assignment and cluster update to optimize $\mathbf{y}_l^k$ and $C_l (\forall k,l)$, respectively. Finally, we construct $M$ groups, each group consists of one client randomly sampled from each cluster without replacement, so that their group centroids are expected to coincide with the global centroid. The pseudo code is given in Algorithm \ref{alg:icg}. \GROUPER~has a complexity of $\mathcal{O}(\frac{K^6\mathcal{F}\tau}{M^2}\log{Kd})$, where $d=\max\{\sigma_l^2 | \forall l\in[1,L]\}$, and $K,M,\mathcal{F},\tau$ are the number of clients, groups, categories, and iterations, respectively. In our experiment, \GROUPER~is quite fast, and it can complete group assignment within only 0.1 seconds, with $K=364,M=52,\mathcal{F}=62$ and $\tau=10$.

\begin{algorithm}[t]
\caption{\colorbox{yellow}{\textsc{\textbf{Inter-Cluster-Grouping}}}}\label{alg:icg}
\begin{algorithmic}[1]
\Require All FL clients $\mathcal{C}$ (with attribute $\mathcal{V}^k$), the total number of FL clients $K$, the group number growth function $f$, the current training round $r$.
 \Ensure The grouping strategy $\mathcal{G}$.
\vspace{1mm}
\State{Randomly sample $L\cdot\lfloor\frac{K}{L}\rfloor$ clients from $\mathcal{C}$ to meet Assumption \ref{icg-assumption}, where $L=\lfloor\frac{K}{f(r)}\rfloor$;}
\Repeat
\State \textsc{Cluster Assignment:} Fix the cluster centroid $C_l$ and optimize $\mathbf{y}$ in Eq. \eqref{eq:ccp-objective} to Eq. \eqref{eq:ccp-value};
\State \textsc{Cluster Update:} Fix $\mathbf{y}$ and update the cluster centroid $C_l$ as follows, $$C_l\gets\frac{\sum_{k=1}^{K}{\mathbf{y}_l^k\mathcal{V}^k}}{\sum_{k=1}^{K}{\mathbf{y}_l^k}} \quad \forall l=1,\cdots,L;$$
\Until{$C_l$ converges;}
\State \textsc{Group Assignment:} Randomly sample one client from each cluster without replacement to construct group $\mathcal{G}_m (\forall m=1,\cdots,f(r))$;
\State \Return $\mathcal{G}=\{\mathcal{G}_1,\cdots,\mathcal{G}_{f(r)}\}$;
\end{algorithmic}
\end{algorithm}
\section{Experimental Evaluation}\label{section:experiment}
 
\subsection{Experiment Setup and Evaluation Metrics}
\textbf{Environment and Hyperparameter Setup.} The experiment platform contains $K=368$ FL clients. The most commonly used FEMNIST\cite{caldas2018leaf} is selected as the benchmark dataset, which is specially designed for non-i.i.d. FL environment and is constructed by dividing 805,263 digit and character samples into 3,550 FL clients in a non-uniform class distribution, with an average of $n=226$ samples per client. For the resource-limited mobile devices, a lightweight neural network composed of 2 convolutional layers and 2 fully connected layers with a total of 6.3 million parameters is adopted as the training model. The standard mini-batch SGD is used by FL clients to train their local models, with the learning rate $\eta=0.01$, the batch size $b=5$ and the local epoch $e=1$. We test \NAME~for $R=500$ rounds. By default, we set the group sampling rate $\kappa=0.3$, the group number growth function $f=\textsc{Log}$ and the corresponding coefficients $\alpha=2$, $\beta=10$. The values of $\kappa$, $f$, $\alpha$, $\beta$ will be further tuned in the experiment to observe their performance influence.

\textbf{Benchmark Algorithms.} In order to highlight the effect of the proposed STP and \GROUPER~separately, we remove them from \NAME~to obtain the naive version, \NAIVE. Then, we compare the performance of the following versions of  \NAME~through ablation studies:
\begin{enumerate}
    \item \textit{\NAIVE}: FL clients are randomly assigned to a fixed number of groups, the clients in the group are trained in sequence and the groups are trained in parallel (e.g., Astraea\cite{duan2019astraea}).
    \item \textit{\NAIVE+\GROUPER}: The \GROUPER~grouping algorithm is adopted in \NAIVE~to assign FL clients to a fixed number of groups strategically. 
    \item \textit{\NAIVE+\GROUPER+STP} (\NAME): On the basis of \NAIVE+\GROUPER, FL clients are reassigned to a growing number of groups in each round as required by STP.
\end{enumerate}
In addition, seven state-of-the-art baselines are experimentally compared with \NAME. They are FedProx \cite{li2018federated}, FedMMD \cite{yao2018two}, FedFusion  \cite{yao2019towards}, IDA \cite{yeganeh2020inverse}, and FedAdagrad, FedAdam, FedYogi from \cite{reddi2020adaptive}.

\textbf{Evaluation Metrics.} In addition to the fundamental test accuracy and test loss, we also define the following metrics to assist in performance evaluation.

\textit{Class Probability Distance} (CPD). The maximum mean discrepancy (MMD) distance is a probability measure in the reproducing kernel Hilbert space. We define CPD as the kernel two-sample estimation with Gaussian radial basis kernel $\mathcal{K}$\cite{gretton2012kernel} to measure the difference in class probability (i.e., normalized class distribution) $\mathcal{P}=\mathrm{norm}(\mathcal{V}_{m_1}),\mathcal{Q}=\mathrm{norm}(\mathcal{V}_{m_2})$ between two groups $m_1,m_2$. Generally, the smaller the CPD, the smaller the data heterogeneity between two groups, and therefore the better the grouping strategy.
\begin{align}
\mathrm{CPD}(m_1,m_2)&=\mathrm{MMD}^2(\mathcal{P}, \mathcal{Q}) \\
\nonumber
&= \mathbf{E}_{x,x'\sim\mathcal{P}}\left[\mathcal{K}(x,x')\right]-2\mathbf{E}_{x\sim\mathcal{P},y\sim\mathcal{Q}}\left[\mathcal{K}(x,y)\right]+\mathbf{E}_{y,y'\sim\mathcal{Q}}\left[\mathcal{K}(y,y')\right].
\end{align}

\textit{Computational Time.} We define $T_{\mathrm{comp}}$ in Eq. \eqref{eq:comp-time-metric} to estimate the computational time cost, where the number of floating point operations (FLOPs) is $\mathcal{N}_{\mathrm{calc}}=96$M FLOPs per sample and $\mathcal{N}_{\mathrm{aggr}}=6.3$M FLOPs for global aggregation, and $\mathcal{T}_{\mathrm{FLOPS}}=567$G FLOPs per second is the computing throughput of the Qualcomm Snapdragon 835 smartphone chip equipped with Adreno 540 GPU.
\begin{equation}
\label{eq:comp-time-metric}
T_{\mathrm{comp}}(R)=\sum_{r=1}^{R}\left(\underbrace{\frac{\mathcal{N}_{\mathrm{calc}}}{\mathcal{T}_{\mathrm{FLOPS}}}\cdot\frac{neK}{\min\left\{K,f(r)\right\}}}_{\mathrm{Local~Training}}+\underbrace{\frac{\mathcal{N}_{\mathrm{aggr}}}{\mathcal{T}_{\mathrm{FLOPS}}}\cdot \left[\kappa f(r)-1\right]}_{\mathrm{Global~Aggregation}}\right)~(\mathrm{s}).
\end{equation}

\textit{Communication Time and Traffic.} We define $T_{\mathrm{comm}}$ in Eq. \eqref{eq:comm-time-metric} to estimate the communication time cost and $D_{\mathrm{comm}}$ in Eq. \eqref{eq:comm-traffic-metric} to estimate the total traffic, where the FL model size is $\mathcal{M}=25.2$MB, the inbound and outbound transmission rates are $\mathcal{R}_{\mathrm{in}}=\mathcal{R}_{\mathrm{out}}=567\mathrm{Mbps}$ (tested in the Internet by AWS EC2 r4.large 2 vCPUs with disabled enhanced networking). Eq. \eqref{eq:comm-time-metric} to Eq. \eqref{eq:comm-traffic-metric} consider only the cross-WAN traffic between FL clients and group managers, but the traffic between the top server and group managers is ignored because they are deployed in the same physical machine.
\begin{equation}
\label{eq:comm-time-metric}
T_{\mathrm{comm}}(R)=8\kappa K\mathcal{M}R(\frac{1}{\mathcal{R}_{\mathrm{in}}}+\frac{1}{\mathcal{R}_{\mathrm{out}}})~(\mathrm{s}),
\end{equation}
\begin{equation}
\label{eq:comm-traffic-metric}
D_{\mathrm{comm}}(R)=2\kappa K\mathcal{M}R~(\mathrm{Bytes}).
\end{equation}

Please note that Eq. \eqref{eq:comp-time-metric} to Eq. \eqref{eq:comm-time-metric} are theoretical metrics, which do not consider memory I/O cost, network congestion, and platform configurations such as different versions of CUDNN/MKLDNN libraries.

\subsection{Results and Discussion}
\textbf{The effect of \GROUPER~and STP.} We first compare the CPD of FedAvg\cite{mcmahan2017communication}, \NAIVE~and \NAIVE+\GROUPER~in Figure  \ref{fig:probability-distance}. These CPDs are calculated between every pair of FL clients. The results show that \NAIVE+\GROUPER~reduces the median CPD of FedAvg by $82\%$ and \NAIVE~by $41\%$. We also show their accuracy performance in Figure \ref{fig:acc-curve}. The baseline \NAIVE~quickly converges but only achieves the accuracy similar to FedAvg. Instead, \NAIVE+\GROUPER~improves the accuracy by $6\%$. This shows that reducing the data heterogeneity among groups can indeed effectively improve FL performance in the presence of non-i.i.d. data. Although \NAIVE+\GROUPER~is already very effective, it still has defects. Figure \ref{fig:loss-curve} shows a rise in the loss value of \NAIVE+\GROUPER, which indicates that it has been overfitted. That is because the training mode of \NAIVE+\GROUPER~is static, it may learn the client order and forget the previous data. Instead, the dynamic \NAME~overcomes overfitting and eventually converges to a higher accuracy $85.4\%$, which proves the effectiveness of combining STP and \GROUPER.

\begin{figure}[t]
\centering
\subfloat[CPD]{\includegraphics[width=.33\textwidth]{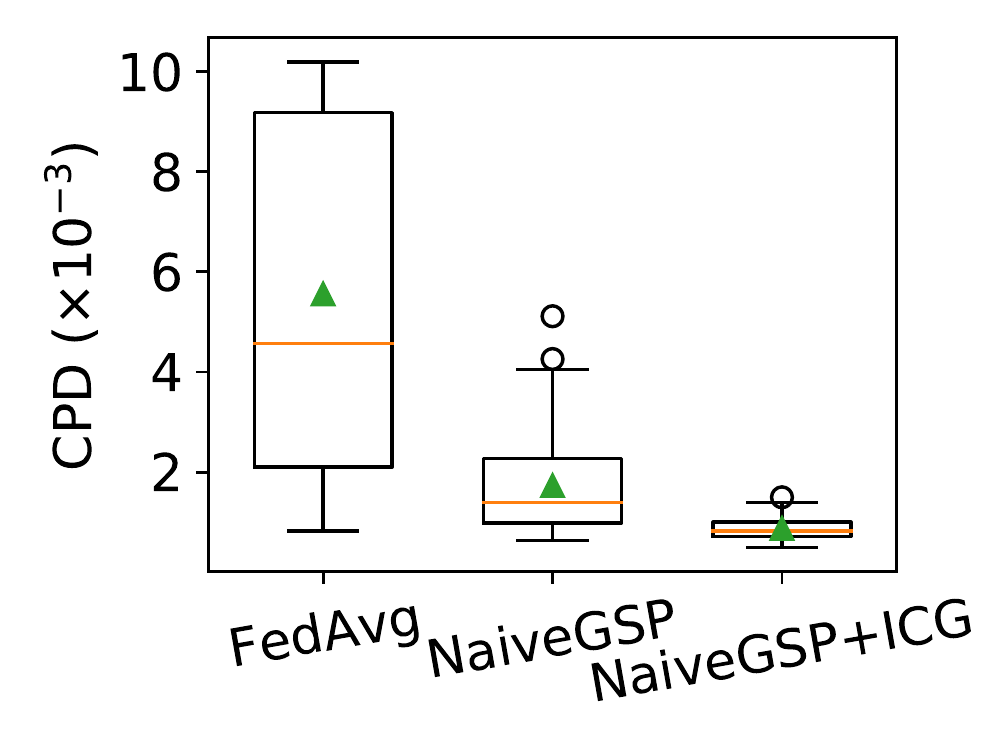}
\label{fig:probability-distance}}
\subfloat[Accuracy curve]{\includegraphics[width=.33\textwidth]{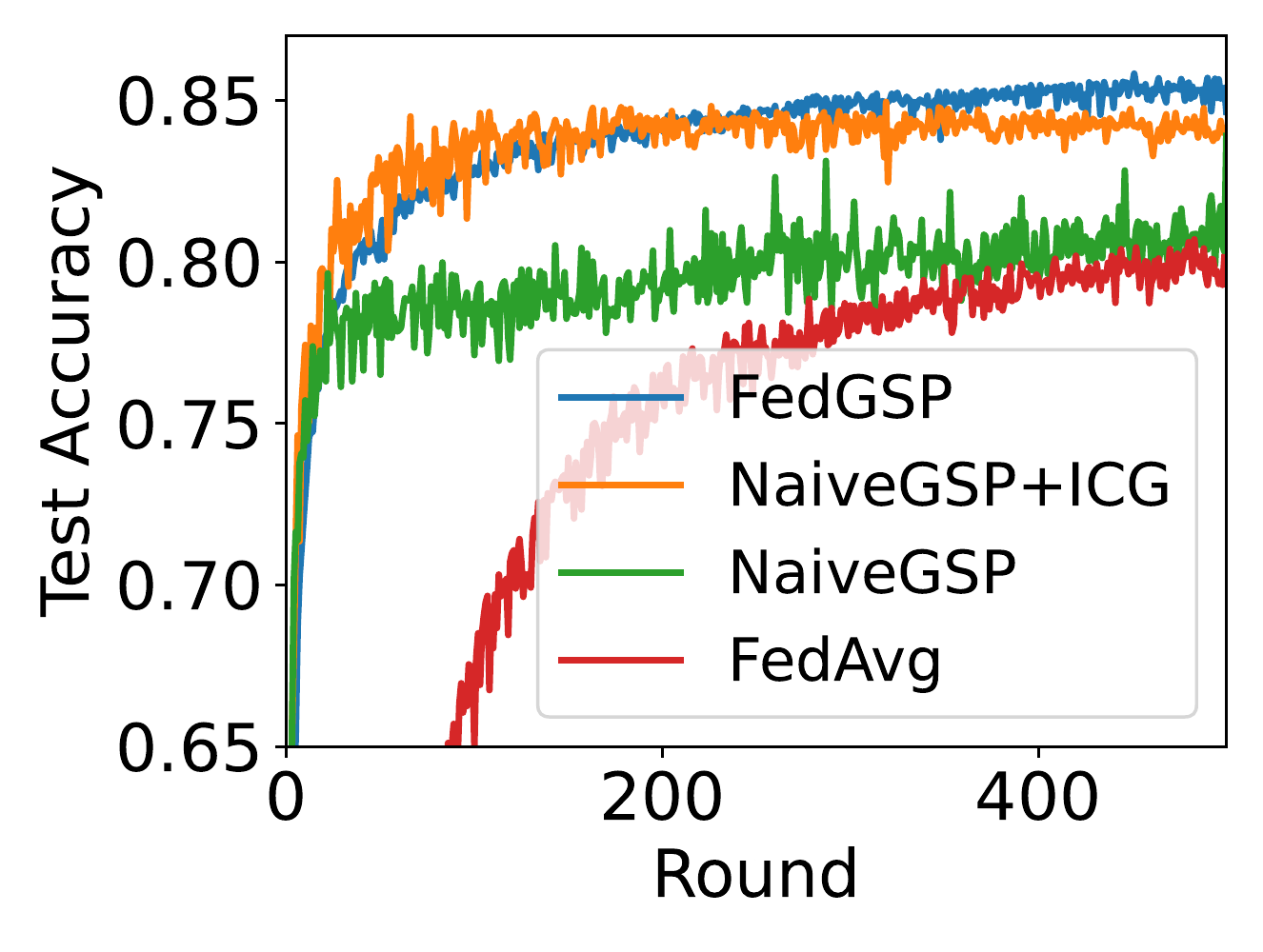}
\label{fig:acc-curve}}
\subfloat[Loss curve]{\includegraphics[width=0.34\textwidth]{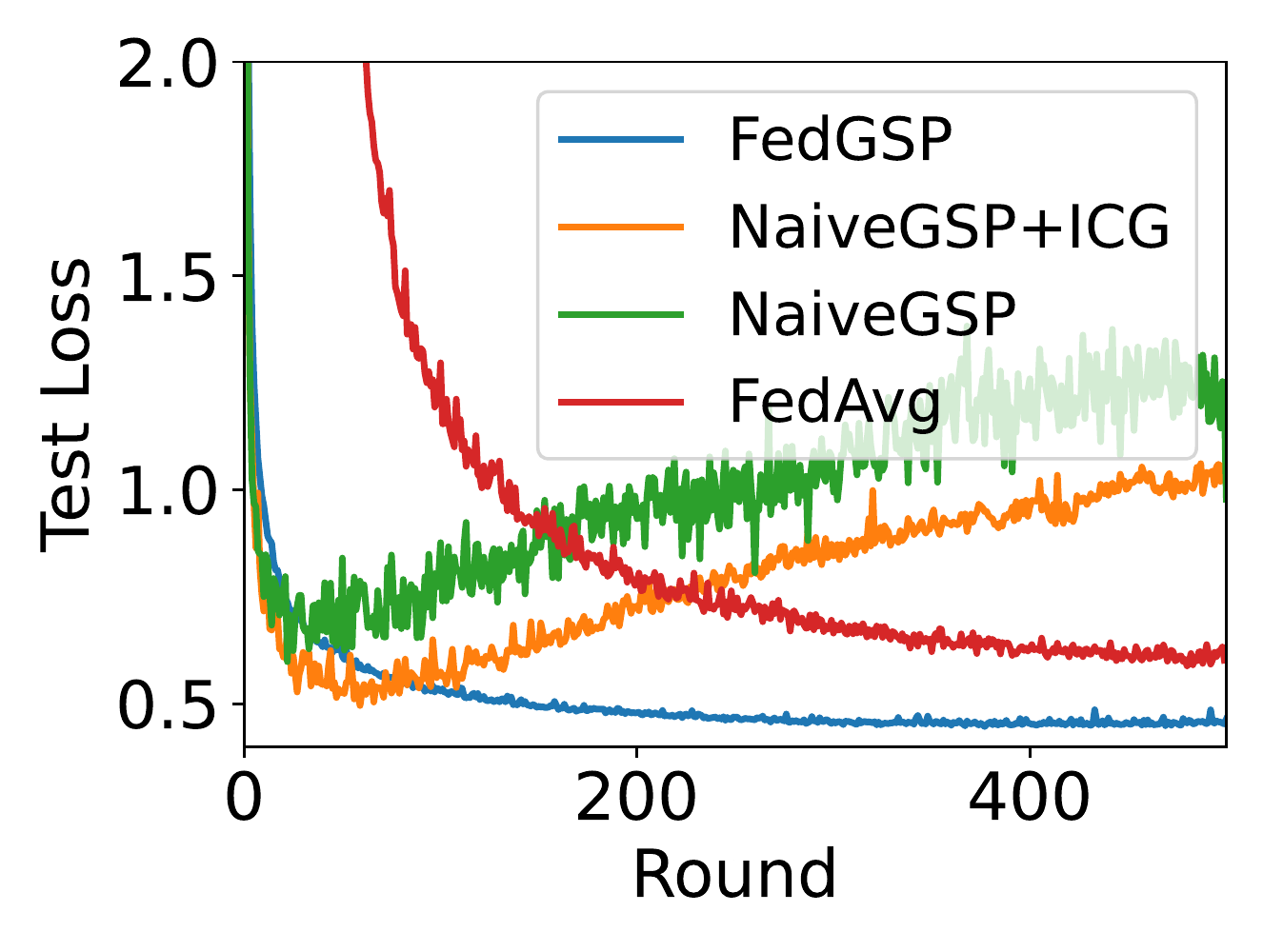}
\label{fig:loss-curve}}
\caption{Comparison among FedAvg, \NAIVE, \NAIVE+\GROUPER~and \NAME~in (a) CPD, (b) accuracy curve and (c) loss curve. In subfigure (a), the orange line represents the median value and the green triangle represents the mean value.}
\end{figure}

\textbf{The effect of the growth function $f$ and its coefficients $\alpha,\beta$.} To explore the performance influence of different group number growth functions $f$, we conduct a grid search on $f=\{\textsc{Linear},\textsc{Log},\textsc{Exp}\}$ and $\alpha,\beta$. The test loss heatmap is shown in Figure \ref{fig:loss-heatmap}. The results show that the logarithmic growth function achieves smaller loss 0.453 with $\alpha=2,\beta=10$ among 3 candidate functions. Besides, we found that both lower and higher $\alpha,\beta$ lead to higher loss values. The reasons may be that a slow increase in the number of groups leads to more STM and results in overfitting, while a rapid increase in the number of groups makes \NAME~degenerate into FedAvg prematurely and suffers the damage of data heterogeneity. Therefore, we recommend $\alpha\cdot\beta$ to be a moderate value, as shown in the green area.

\begin{figure}[t]
\centering
\subfloat[Linear]{\includegraphics[width=0.34\textwidth]{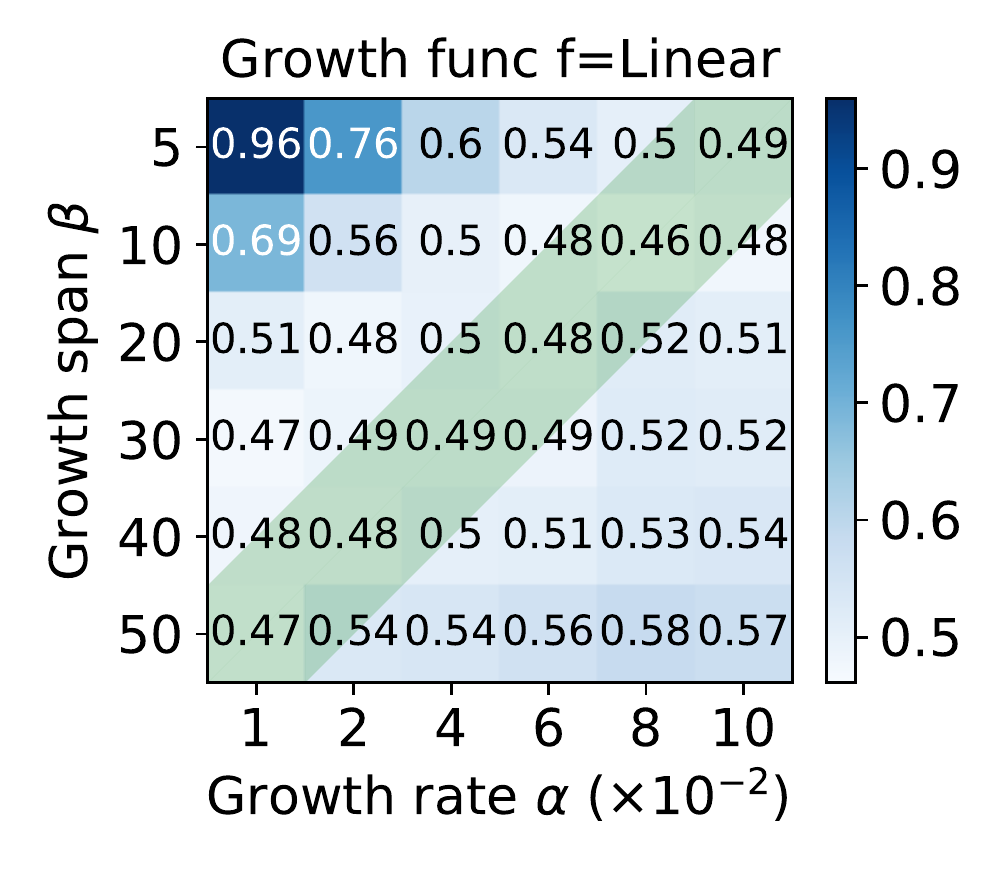}}
\subfloat[Logarithmic]{\includegraphics[width=0.34\textwidth]{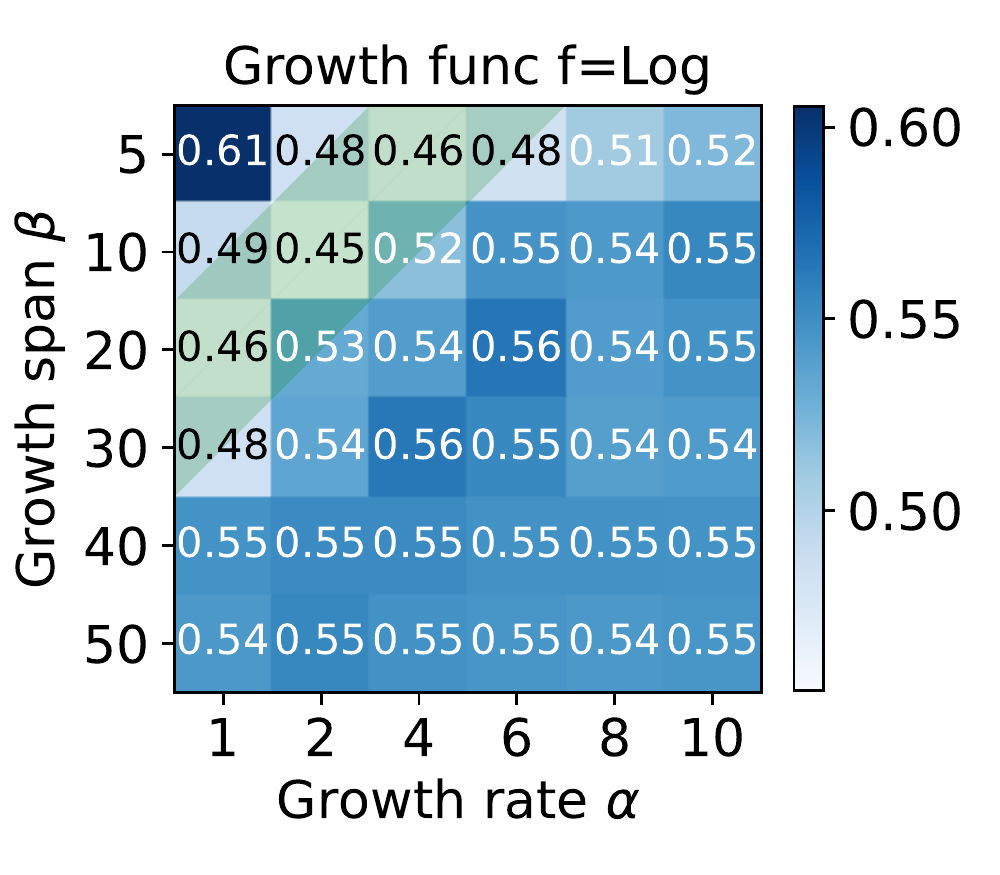}}
\subfloat[Exponential]{\includegraphics[width=0.34\textwidth]{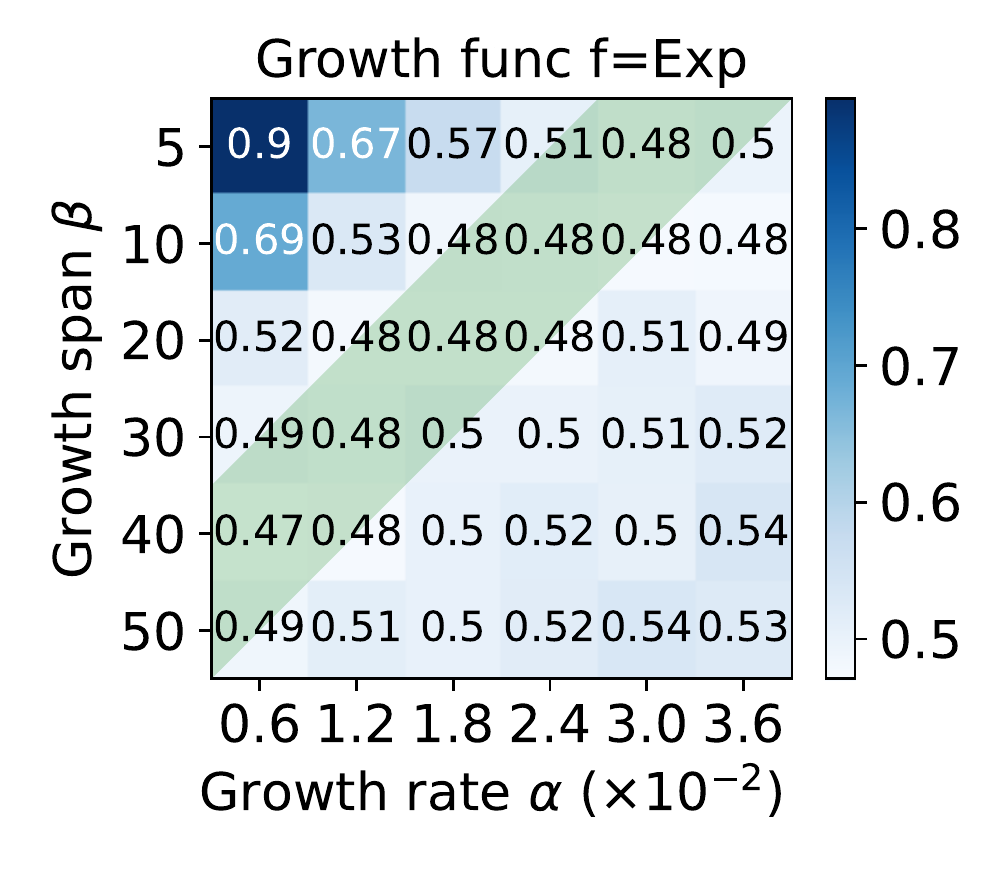}}\\
\caption{Test loss heatmap of (a) linear, (b) logarithmic and (c) exponential growth functions over different $\alpha$ and $\beta$ settings in \NAME.}
\label{fig:loss-heatmap}
\end{figure}

\begin{figure}[t]
\centering
\subfloat[Accuracy]{\includegraphics[width=0.33\textwidth]{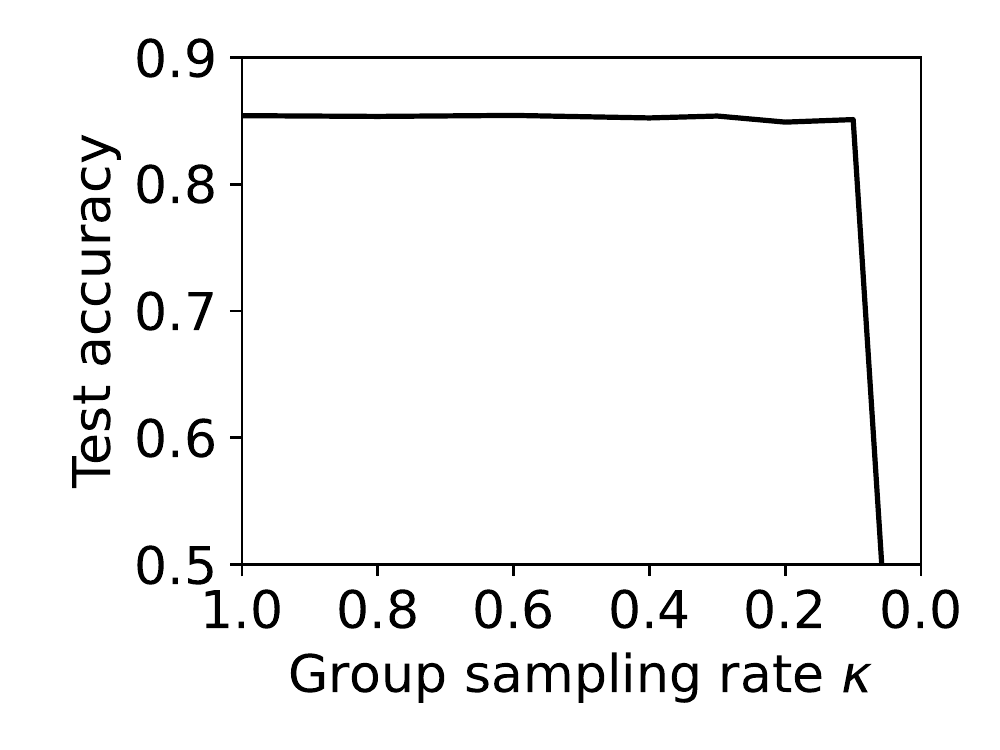}
\label{fig:acc-vs-kappa}}
\subfloat[Computational Time]{\includegraphics[width=0.33\textwidth]{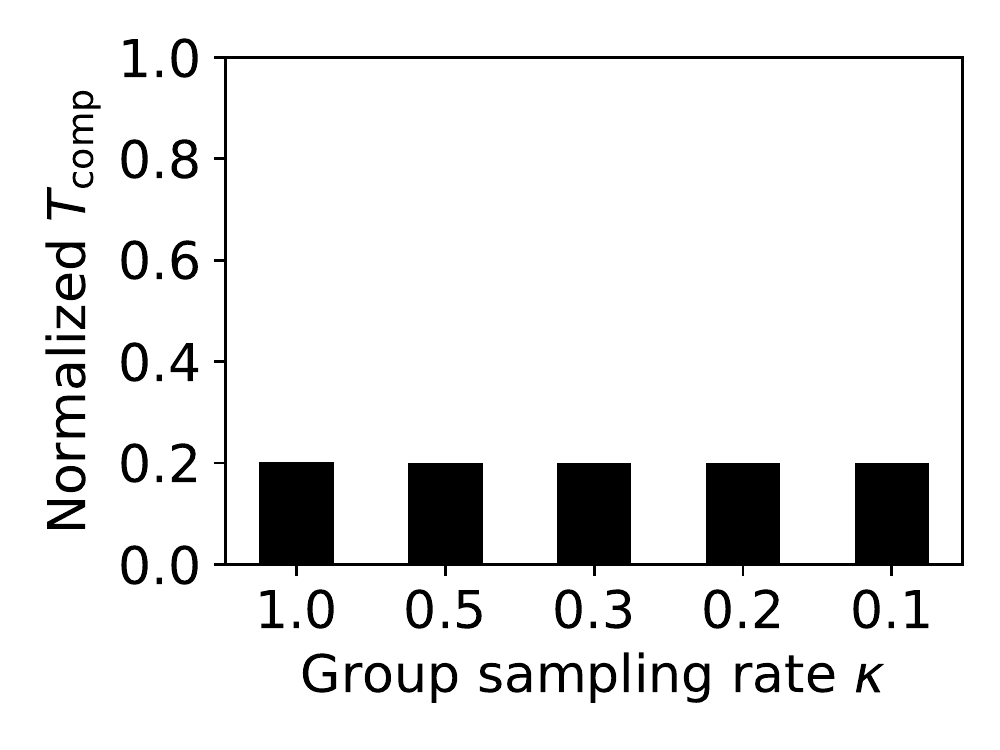}
\label{fig:comp-time-vs-kappa}}
\subfloat[Communication Time]{\includegraphics[width=0.33\textwidth]{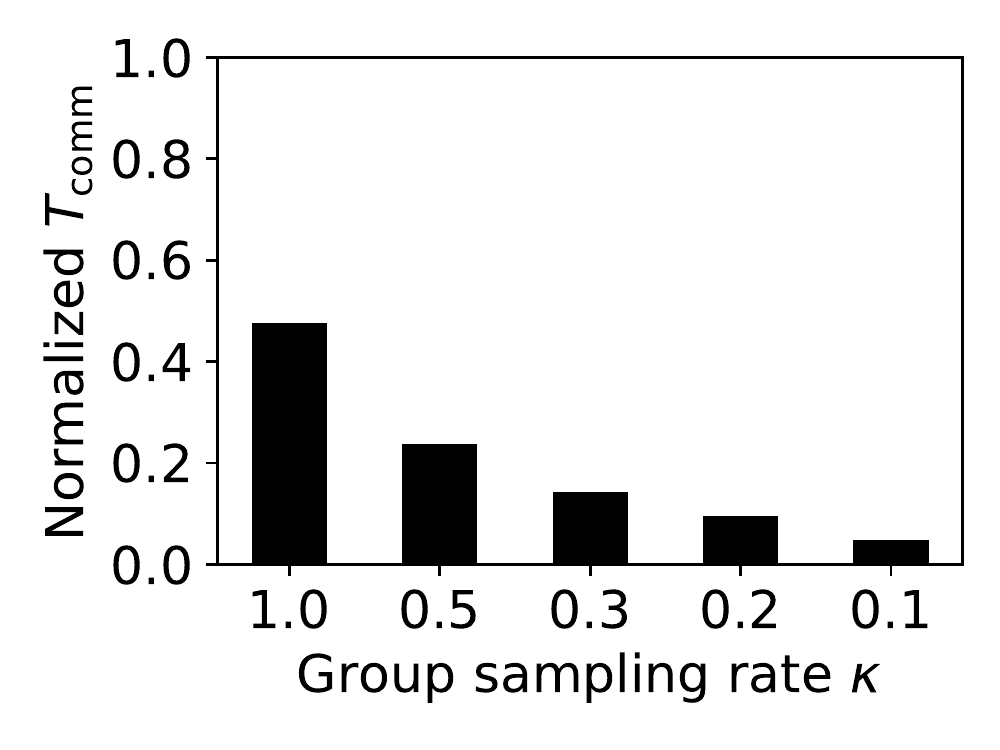}
\label{fig:comm-time-vs-kappa}}
\caption{Comparison of (a) accuracy and the normalized (b) computational time and (c) communication time over different $\kappa$ settings in \NAME.}
\label{fig:kappa-comparison}
\end{figure}

\textbf{The effect of the group sampling rate $\kappa$.} $\kappa$ controls the participation rate of groups (also the participation rate of FL clients) in each round. We set $\kappa=\{0.1,0.2,0.3,0.5,1.0\}$ to observe its effect on accuracy and time cost. Figure \ref{fig:acc-vs-kappa} shows the robustness of accuracy to different values of $\kappa$. This is expected because \GROUPER~forces the data of each group to become homogeneous, which enables each group to individually represent the global data. In addition, Figures \ref{fig:comp-time-vs-kappa} and \ref{fig:comm-time-vs-kappa} show that $\kappa$ has a negligible effect on computational time $T_{\mathrm{comp}}$, but a proportional effect on communication time $T_{\mathrm{comm}}$ because a larger $\kappa$ means more model data are involved in data transmission. Therefore, we recommend that only $\kappa\in[0.1,0.3]$ of groups are sampled to participate in FL in each round to reduce the overall time cost. In our experiments, we set $\kappa=0.3$ by default.
  
\textbf{The performance comparison of \NAME.} We compare \NAME~with seven state-of-the-art approaches and summarize their test accuracy, test loss and training rounds (required to reach the accuracy of $80\%$) in Table \ref{table:fedgsp-vs-others}. The results show that \NAME~achieves $5.3\%$ higher accuracy than FedAvg and reaches the accuracy of $80\%$ within only $34$ rounds. Moreover, \NAME~outperforms all the comparison approaches, with an average of $3.7\%$ higher accuracy, $0.123$ lower loss and $84\%$ less rounds, which shows its effectiveness to improve FL performance in the presence of non-i.i.d. data.

\textbf{The time and traffic cost of \NAME.} Figure \ref{fig:6} visualizes the time cost and total traffic of \NAME~and FedAvg when they reach the accuracy of $80\%$. The time cost consists of computational time $T_{\mathrm{comp}}(R)$ and communication time $T_{\mathrm{comm}}(R)$, of which $T_{\mathrm{comm}}(R)$ accounts for the majority due to the huge data traffic from hundreds of FL clients has exacerbated the bandwidth bottleneck of the cloud server. Figure \ref{fig:6} also shows that \NAME~spends $93\%$ less time and traffic than FedAvg, which benefits from a cliff-like reduction in the number of training rounds $R$ (only 34 rounds to reach the accuracy of $80\%$). Therefore, \NAME~is not only accurate, but also training- and communication-efficient.

\vspace{10mm}
\begin{minipage}{\textwidth}
  \begin{minipage}[b]{0.4\textwidth}
    \centering
    \includegraphics[width=\textwidth]{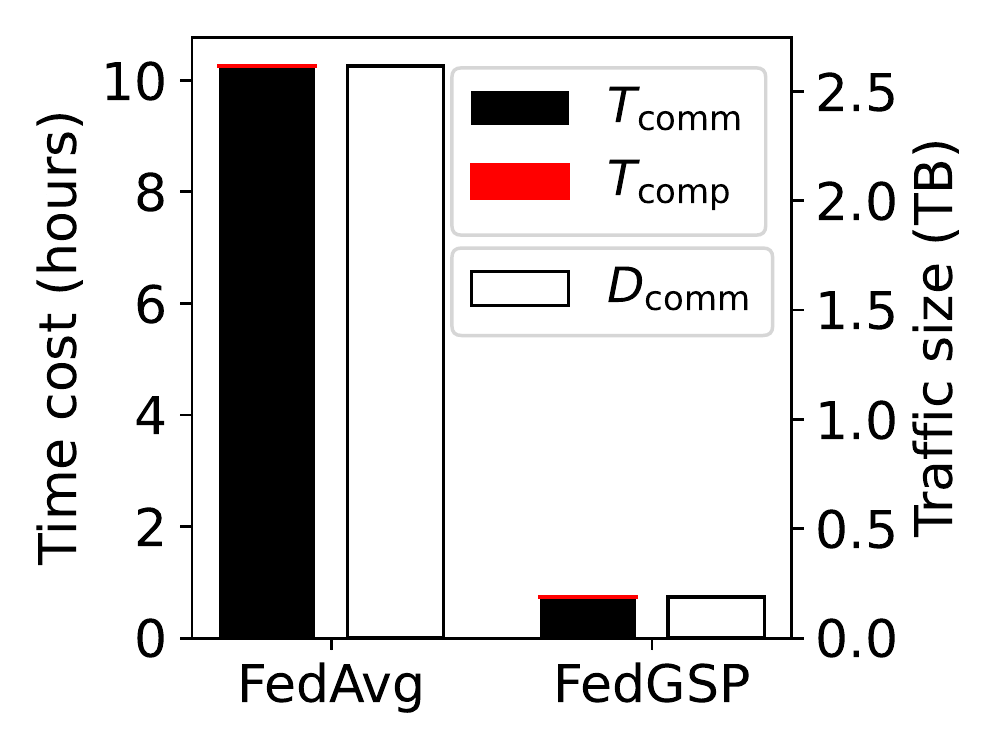}\label{fig:cost-compare}
    \captionof{figure}{Comparison of time and traffic cost to reach $80\%$ accuracy.}\label{fig:6}
  \end{minipage}
  \quad
  \begin{minipage}[b]{0.5\textwidth}
    \centering
    \scalebox{0.9}{
    \begin{tabular}{c|c|c|c}
      \hline
      Algorithm & Accuracy & Loss & Rounds \\ 
      \hline\hline
      FedAvg & 80.1\% & 0.602 & 470 \\ 
      \hline
      FedProx & 78.7\% & 0.633 & $\times$ \\ 
      FedMMD & 81.7\% & 0.587 & 336 \\
      FedFusion & 82.4\% & 0.554 & 230 \\
      IDA & 82.0\% & 0.567 & 256 \\
      FedAdagrad & 81.9\% & 0.582 & 297 \\
      FedAdam & 82.1\% & 0.566 & 87 \\
      FedYogi & 83.2\% & 0.543 & 93 \\ \hline
      \textbf{\NAME} & \textbf{85.4\%} & \textbf{0.453} & \textbf{34} \\ 
      \hline
    \end{tabular}}
      \captionof{table}{Comparison of accuracy, loss, and rounds required to reach $80\%$ accuracy.}
      \label{table:fedgsp-vs-others}
    \end{minipage}
\end{minipage}

\section{Conclusions}
In this paper, we addressed the problem of FL model performance degradation in the presence of non-i.i.d. data. We proposed a new concept of dynamic STP collaborative training that is robust against data heterogeneity, and a grouped framework \NAME~to support dynamic management of the continuously growing client groups. In addition, we proposed \GROUPER~to support efficient group assignment in STP by solving a constrained clustering problem with equal group size constraint, aiming to minimize the data distribution divergence among groups. We experimentally evaluated \NAME~on LEAF, a widely adopted FL benchmark platform, with the non-i.i.d. FEMNIST dataset. The results showed that \NAME~significantly outperforms seven state-of-the-art approaches in terms of model accuracy and convergence speed. In addition, \NAME~is both training- and communication-efficient, making it suitable for practical applications. 

\vspace{5mm}
\noindent \textbf{Acknowledgments.} This work is supported, in part, by the National Key Research and Development Program of China (2019YFB1802800); PCL Future Greater-Bay Area Network Facilities for Large-Scale Experiments and Applications (LZC0019), China; National Research Foundation, Singapore under its AI Singapore Programme (AISG Award No: AISG2-RP-2020-019); the RIE 2020 Advanced Manufacturing and Engineering (AME) Programmatic Fund (No. A20G8b0102), Singapore; and Nanyang Assistant Professorship (NAP). Any opinions, findings and conclusions or recommendations expressed in this material are those of the authors and do not reflect the views of the funding agencies.


\end{document}